\documentclass[a4paper,11pt]{article}
\usepackage[T1]{fontenc}
\usepackage[utf8]{inputenc}
\usepackage{lmodern}
\usepackage{amsmath,amssymb,amsthm,mathtools,bm}
\usepackage{geometry}
\usepackage{enumitem}
\usepackage{booktabs}
\usepackage{xcolor}
\usepackage{hyperref}
\hypersetup{colorlinks=true,linkcolor=blue!55!black,citecolor=blue!55!black,urlcolor=blue!55!black}
\theoremstyle{plain}
\newtheorem{theorem}{Theorem}[section]
\newtheorem{lemma}[theorem]{Lemma}
\newtheorem{proposition}[theorem]{Proposition}
\newtheorem{conjecture}[theorem]{Conjecture}

\theoremstyle{definition}
\newtheorem{definition}[theorem]{Definition}

\theoremstyle{remark}
\newtheorem{remark}[theorem]{Remark}

\DeclareMathOperator{\rank}{rank}
\DeclareMathOperator{\Imop}{Im}
\DeclareMathOperator{\diag}{diag}
\DeclareMathOperator{\adj}{adj}
\DeclareMathOperator{\sgn}{sgn}
\DeclareMathOperator{\logit}{logit}

\title{Moonshine: An Autonomous Mathematical Research Agent Centered on Conjecture Generation}
\author{Xiaoyang Chen, Xiang Jiang}
\date{}
\author{
	Xiaoyang Chen, Xiang Jiang\\
	School of Mathematical Science, Tongji University\\
	Moonshine Technology
}
\begin{document}
\maketitle
\begin{center}
\small\textbf{Project homepage and source code:}\ \url{https://github.com/DeepMathLLM/Moonshine}
\end{center}

\begin{abstract}
Moonshine is an autonomous agent whose central objective is to generate mathematical conjectures. Its core capability is to extract structure from classical problems, distill new concepts, and formulate conjectures of mathematical significance. Rather than treating the solution of a single proposition as its endpoint, Moonshine builds an extensible theoretical framework through conjecture generation, bridge building, and obstacle identification. This article uses Moonshine's exploration of the Jacobian conjecture as an example. It shows how the central logic of whether local nondegeneracy can force global injectivity is transferred to one-hidden-layer affine-ridge sigmoid networks. This leads to the formulation of the \emph{Neural Jacobian Conjecture} (NJC): if such a network has strictly positive Jacobian determinant on the whole space, then it must be globally injective. By invoking GPT-5.5-pro and DeepSeek-V4-pro separately, Moonshine obtained independent complete proofs for the case \(N=n+1\). In addition, with the assistance of ChatGPT through interactive use of its web interface with GPT-5.5-pro, a geometric-topological proof was developed. These results provide preliminary evidence for the plausibility of the conjecture. The general higher-width case \(N\ge n+2\), however, remains unresolved and is left for further investigation. This work illustrates Moonshine's ability to autonomously generate meaningful mathematical problems and make rigorous progress on them.
\end{abstract}

\section{Moonshine as an Autonomous Mathematical Research Agent Centered on Conjecture Generation}

Moonshine is an autonomous mathematical research agent framework. It differs from question-answering systems or numerical computation tools in that its design goal is to \textbf{autonomously generate valuable mathematical conjectures} and to verify or refute them through structured exploration. Moonshine's behavior revolves around the following components:

\begin{itemize}[leftmargin=2em]
\item \textbf{Structural recognition and conjecture distillation.} It identifies core structural features in classical problems or mathematical objects, distills new concepts, and formulates precise and testable conjectures based on them.
\item \textbf{Deep exploration and bridge building.} It connects conjectures with existing theories, explores interactions with other mathematical areas, and derives conditional conclusions.
\item \textbf{Obstacle identification and boundary delineation.} Through proofs and counterexample construction, it clarifies sufficient conditions under which a conjecture holds and identifies barriers that cannot be crossed, thereby determining the true range of applicability.
\item \textbf{Research logs and memory.} It maintains long-term structured logs recording the evolution of conjectures, proof attempts, failed paths, and open subproblems, thereby forming an extensible theoretical framework.
\end{itemize}

Moonshine is organized around a runtime home directory (by default \texttt{\textasciitilde/.moonshine}), which contains configuration files, project folders, session logs, knowledge bases, skills, tools, and MCP server definitions. The agent operates in a research mode, in which it can iterate autonomously, search its historical memory, invoke verification tools, and gradually enrich its understanding of a given conjecture.

\paragraph{Moonshine's exploration of the Jacobian conjecture and the formulation of the Neural Jacobian Conjecture.}
Inspired by the classical complex Jacobian conjecture, Moonshine did not attempt to prove or disprove the original conjecture directly. Instead, it extracted the core logic: whether local nondegeneracy, expressed by a nonzero Jacobian determinant, can force global injectivity. This logic was then transferred to a restricted but structurally transparent function class, namely one-hidden-layer affine-ridge sigmoid networks. By analyzing the special algebraic and geometric structure of this network class, Moonshine distilled a new conjecture, the \textbf{Neural Jacobian Conjecture} (NJC). The conjecture asserts that if the Jacobian determinant of such a network is everywhere positive, then the network must be globally injective. This conjecture is both an analogy with the classical problem and an independently meaningful statement, because it attributes the rigidity from local diffeomorphism to global injectivity to the special affine-ridge structure.

The following sections present Moonshine's exploration of the NJC.

\section{The Neural Jacobian Conjecture (NJC)}

\subsection{Function class and notation}

Let \(\sigma:\mathbb R\to(0,1)\) be the logistic sigmoid function
\[
\sigma(t)=\frac{1}{1+e^{-t}}.
\]
It is strictly increasing and satisfies \(\sigma'(t)=\sigma(t)(1-\sigma(t))>0\) for all \(t\in\mathbb R\).

\begin{definition}[Affine-ridge sigmoid networks]
For \(n,N\ge 1\), a map \(F:\mathbb R^n\to\mathbb R^n\) is called a one-hidden-layer affine-ridge sigmoid network of width \(N\) if
\[
F(x)=W^{(2)}\sigma\bigl(W^{(1)}x+b^{(1)}\bigr)+b^{(2)},
\]
where \(W^{(1)}\in\mathbb R^{N\times n}\), \(W^{(2)}\in\mathbb R^{n\times N}\), \(b^{(1)}\in\mathbb R^N\), \(b^{(2)}\in\mathbb R^n\), and \(\sigma\) acts componentwise. This class is denoted by \(\mathcal N_{n,N}\). When injectivity is unaffected by the notation, we write
\[
F(x)=A\sigma(Bx+c).
\]
\end{definition}

\begin{definition}[Positive-Jacobian subclass]
Define
\[
\mathcal N_{n,N}^+=\{F\in\mathcal N_{n,N}: \det DF(x)>0\text{ for all }x\in\mathbb R^n\}.
\]
\end{definition}

\subsection{The conjecture proposed by Moonshine}

\begin{conjecture}[Neural Jacobian Conjecture]
For every \(F\in\mathcal N_{n,N}^+\), the map \(F\) is globally injective.
\end{conjecture}

The motivation behind the conjecture is that a local diffeomorphism can in general form multiple sheets, whereas the special structure of affine-ridge sigmoid networks - in particular, the interaction between the kernel of the output weight matrix and the image of the hidden layer - may enforce uniqueness. If true, the NJC would provide a rigidity theorem in the neural-network setting, showing that local nondegeneracy implies global injectivity under this special structural constraint, and it would form an interesting parallel with the classical Jacobian conjecture.

\subsection{A geometric reformulation}

Let \(h(x)=\sigma(Bx+c)\), and let
\[
X_1=h(\mathbb R^n)\subset (0,1)^N,\qquad X_2=\ker A\subset\mathbb R^N.
\]
Here \(X_1\) is the hidden-layer submanifold and \(X_2\) is the output kernel. Then \(F\) is injective if and only if
\[
(p+X_2)\cap X_1=\{p\},\qquad \forall p\in X_1.
\]
The positive-Jacobian condition \(\det DF>0\) is equivalent to the transversal intersection of \(X_1\) and \(p+X_2\) at every point, with local intersection index \(+1\). Hence the NJC can be reformulated as follows: do transversality and positive local index force every affine fiber to intersect \(X_1\) uniquely?

\section{Partial Verification of the NJC in Low Width}

Moonshine does not claim to have proved the NJC in full. It first analyzes the most accessible cases, namely when the hidden-layer width \(N\) equals the input dimension \(n\), or is larger by one. These cases provide initial evidence for the conjecture.

\subsection{The case \texorpdfstring{\(N=n\)}{N=n}}

\begin{proposition}
If \(N=n\) and \(F\in\mathcal N_{n,n}^+\), then \(F\) is injective.
\end{proposition}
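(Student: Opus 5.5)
When $N=n$ both weight matrices are square, and I would prove injectivity by factoring $F$ into injective pieces. Write $F(x)=A\sigma(Bx+c)+b^{(2)}$ with $A,B\in\mathbb R^{n\times n}$. By the chain rule,
\[
DF(x)=A\,\diag\bigl(\sigma'(Bx+c)\bigr)\,B,
\]
and taking determinants gives
\[
\det DF(x)=\det A\cdot\prod_{i=1}^n\sigma'\bigl((Bx+c)_i\bigr)\cdot\det B .
\]
Every factor $\sigma'$ is strictly positive. So the hypothesis $\det DF(x)>0$, even at a single point, forces $\det A\det B>0$. In particular $\det A\neq 0$ and $\det B\neq 0$, so $A$ and $B$ are both invertible.

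Injectivity then follows from the factorization
\[
F=T_{A,b^{(2)}}\circ\Sigma\circ T_{B,c},
\]
where $T_{B,c}(x)=Bx+c$ is an invertible affine map, $\Sigma$ is the componentwise map $y\mapsto(\sigma(y_1),\dots,\sigma(y_n))$, and $T_{A,b^{(2)}}(z)=Az+b^{(2)}$ is an invertible affine map.

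1. The affine maps $T_{B,c}$ and $T_{A,b^{(2)}}$ are bijections of $\mathbb R^n$ because $B$ and $A$ are invertible.
2. $\Sigma$ is injective because $\sigma$ is strictly increasing. If $\Sigma(y)=\Sigma(y')$, then $\sigma(y_i)=\sigma(y'_i)$ for each $i$, hence $y_i=y'_i$ for each $i$.
3. A composition of injective maps is injective, so $F$ is injective.

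In the geometric language of the reformulation above, the same fact reads as follows: $X_2=\ker A=\{0\}$, so each fiber $p+X_2$ is the single point $p$ and the condition $(p+X_2)\cap X_1=\{p\}$ holds trivially.

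There is essentially no obstacle in this case. The only point that needs care is the determinant step: it uses only that $A$ and $B$ are square, so the determinant factors multiplicatively. This is exactly what fails once $N>n$, where $\ker A$ is nontrivial. The argument also shows that the positivity of the sign plays no role here; $\det DF\neq0$ at one point already suffices.
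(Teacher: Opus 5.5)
Your proof is correct and follows the paper's argument: the determinant factors as $\det A\cdot\det D(x)\cdot\det B$, which forces $A$ and $B$ to be invertible, and $F$ is then a composition of an invertible affine map, the componentwise strictly increasing sigmoid, and an invertible affine map. Your two additions are also correct: you keep the output bias $b^{(2)}$ explicit, and you observe that $\det DF\neq 0$ at a single point already suffices.
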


\begin{proof}
When \(N=n\), the matrices \(A,B\in\mathbb R^{n\times n}\). Define
\[
D(x)=\diag\bigl(\sigma'((Bx+c)_1),\ldots,\sigma'((Bx+c)_{n})\bigr).
\]

Since
\[
\det DF(x)=\det A\cdot \det D(x)\cdot \det B>0
\]
for all \(x\), and \(\det D(x)>0\), both \(\det A\) and \(\det B\) are nonzero. Thus \(A\) and \(B\) are invertible. The map \(F(x)=A\sigma(Bx+c)\) is a composition of three injective maps: the invertible affine map \(x\mapsto Bx+c\), the componentwise strictly increasing map \(z\mapsto \sigma(z)\), and the invertible linear map \(w\mapsto Aw\). Hence \(F\) is injective.
\end{proof}

\subsection{The case \texorpdfstring{\(N=n+1\)}{N=n+1}}

This is the smallest width in which \(A\) has a nontrivial kernel. The main result of this section is the following.

\begin{theorem}\label{thm:Nnplus1}
Let \(F:\mathbb R^n\to\mathbb R^n\) be a one-hidden-layer affine-ridge sigmoid network with \(N=n+1\) hidden units. If
\[
\det DF(x)>0,
\qquad \forall x\in\mathbb R^n,
\]
then \(F\) is injective.
\end{theorem}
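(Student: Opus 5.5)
The plan is to reduce everything to a one-dimensional sign question via Cauchy--Binet. Write \(F(x)=A\sigma(Bx+c)\) with \(A\in\mathbb R^{n\times(n+1)}\), \(B\in\mathbb R^{(n+1)\times n}\), and \(DF(x)=A\,D(x)\,B\) with \(D(x)=\diag(\sigma'(z_i))\), \(z=Bx+c\). Since \(\det DF\neq0\), both \(A\) and \(B\) have rank \(n\). So \(\ker A=\mathbb R v\) for some \(v\neq0\), and \(\operatorname{Im}B=u^\perp\) for some \(u\neq0\). Then the hidden pre-activations range exactly over the affine hyperplane
\[
H=\{z\in\mathbb R^{n+1}: u\cdot z=u\cdot c\}.
\]
Cauchy--Binet gives \(\det(ADB)=\sum_i \det A_{\hat i}\,\det B_{\hat i}\prod_{j\ne i}d_j\). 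The maximal minors satisfy \(\det A_{\hat i}=\alpha(-1)^iv_i\) and \(\det B_{\hat i}=\beta(-1)^iu_i\) for constants \(\alpha,\beta\neq0\). Hence
\[
\det DF(x)=\alpha\beta\Bigl(\prod_j\sigma'(z_j)\Bigr)\,S(z),\qquad S(z):=\sum_{i=1}^{n+1}\frac{u_iv_i}{\sigma'(z_i)}.
\]
The hypothesis is therefore exactly that \(S\) has a constant strict sign on \(H\) (the sign of \(\alpha\beta\)).

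Next I would translate non-injectivity. Suppose \(F(x)=F(y)\) with \(x\neq y\), and set \(z=Bx+c\), \(w=By+c\in H\); then \(z\ne w\) because \(B\) is injective. We get \(\sigma(z)-\sigma(w)=tv\) with \(t\neq0\), and also \(u\cdot(z-w)=0\). Parametrize the straight segment in the output box \((0,1)^{n+1}\), namely \(\phi(s)=\sigma(w)+stv\), and pull it back as \(\zeta(s)=\sigma^{-1}(\phi(s))\). Then \(\zeta_i'(s)=tv_i/\sigma'(\zeta_i(s))\). Integrating,
\[
0=u\cdot(z-w)=t\int_0^1 S(\zeta(s))\,ds.
\]
So it suffices to show that \(S\) cannot vanish identically in average along such a curve. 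An alternative formulation uses the mean value theorem componentwise, \(z_i-w_i=tv_i/m_i\) with \(m_i\) an averaged value of \(\sigma'\) on \([w_i,z_i]\), which gives \(\sum_i u_iv_i/m_i=0\).

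The main obstacle is that \(\zeta(s)\) need not stay in \(H\), while the hypothesis controls the sign of \(S\) only on \(H\). I would attack this through the sign pattern of the products \(u_iv_i\). Let \(P=\{i:u_iv_i>0\}\) and \(Q=\{i:u_iv_i<0\}\). If one of \(P,Q\) is empty, then \(S\) has a fixed sign on all of \(\mathbb R^{n+1}\) (it cannot vanish identically, since it is nonzero on \(H\)), and the integral identity gives an immediate contradiction.

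The hard case is when both \(P\) and \(Q\) are nonempty. Here I would try to show this case is incompatible with \(S\) having constant sign on \(H\), using that \(1/\sigma'(\zeta)=2+e^{\zeta}+e^{-\zeta}\) ranges over \([4,\infty)\) with exponential growth in \(|\zeta|\). The idea is to move along \(H\) in directions where a \(P\)-coordinate or a \(Q\)-coordinate blows up while the compensating coordinates (with respect to the constraint \(u\cdot\zeta=\mathrm{const}\)) grow more slowly. This requires comparing the magnitudes \(|u_i|\). There are also degenerate subcases where \(u_i=0\) (those terms vanish) or where only one coordinate has \(u_i\neq0\) (then \(\zeta_i\) is frozen on \(H\)); these must be treated separately.

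If this exponential-dominance argument does not fully close, the fallback is different. Instead of the \(\sigma\)-straight path, I would choose a path from \(w\) to \(z\) inside \(H\) and track \(\ell(\sigma(\cdot))\) for a linear functional \(\ell\) with \(\ell(v)\neq0\). Degree/orientation considerations on the fiber \(p+\ker A\) would then force the intersection with \(X_1\) to be unique.
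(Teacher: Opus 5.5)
Your setup is correct and matches the paper's geometric-topological proof. Your Cauchy--Binet identity \(\det DF=\alpha\beta\bigl(\prod_j\sigma'(z_j)\bigr)S(z)\) is the paper's determinant-comparison lemma with \(L=D(x)\). Your curve \(\zeta\) is the logit of the kernel line \(\sigma(w)+\mathbb R v\). The case where \(P\) or \(Q\) is empty is handled correctly.

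The gap is the hard case. Mixed signs are \emph{not} incompatible with the hypothesis, so the exponential-dominance plan aims at a false statement. Take \(n=1\) and \(F(x)=\sigma(x)-\tfrac14\sigma(2x)\). Since \(\sigma'(t)=1/(2+2\cosh t)\), we have \(\sigma'(2x)\le\sigma'(x)\), hence \(F'(x)\ge\tfrac12\sigma'(x)>0\). Yet with \(v=(1,4)\in\ker A\) and \(u=(-2,1)\perp\Imop B\) you get \(u_1v_1=-2<0<4=u_2v_2\). Then \(S(z)=4(1+2\cosh z_2-\cosh z_1)\) is positive on \(H=\{(x,2x)\}\) but negative at \(z=(3,0)\). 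Appending untouched units \(\sigma(x_2),\dots,\sigma(x_n)\) gives such examples in every dimension with \(N=n+1\).

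So \(S\) genuinely changes sign off \(H\), and \(t\int_0^1 S(\zeta(s))\,ds=0\) gives no contradiction: averaging discards exactly the information you need. Your fallback (a path inside \(H\), or ``degree/orientation on the fiber'') points in the right direction but supplies no argument.

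The missing step uses only what you already have. Set \(g(s)=u\cdot\zeta(s)-u\cdot c\). Then:
\begin{itemize}
\item \(g(0)=g(1)=0\) and \(g'(s)=t\,S(\zeta(s))\);
\item \(g(s)=0\) exactly when \(\zeta(s)\in H=c+\Imop B\), i.e.\ when \(\zeta(s)=Bx_s+c\) for an actual input \(x_s\).
\end{itemize}
You therefore never need the sign of \(S\) off \(H\): at every zero of \(g\), \(g'\) is nonzero with the fixed sign \(\sgn(t\alpha\beta)\). That is impossible. Since \(g'(0)\ne0\), zeros cannot accumulate at \(0\), so there is a smallest zero \(s^*\in(0,1]\). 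Then \(g\) keeps the sign of \(g'(0)\) on \((0,s^*)\), which forces \(g'(s^*)\) to be zero or of the opposite sign. This is the paper's alternation argument: transversal zeros of a one-variable function alternate in sign, while \(\det DF>0\) makes all their signs equal. With it, the integral identity, the \(P/Q\) split and the degenerate subcases become unnecessary.

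The paper's two algebraic proofs avoid the issue differently. They move along a straight segment in \((0,1)^n\) and lift it to the graph \(X_1\). Every point of the path is then \(\sigma(Bx_t+c)\) for some \(x_t\), so \(\det DF(x_t)\) controls the derivative along the whole path, not only where it returns to \(H\).
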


The first proof below was obtained by Moonshine using GPT-5.5-pro.

\subsubsection{First algebraic proof: an injectivity lemma on convex sets}

As a supplement to Theorem~\ref{thm:Nnplus1}, we give a proof whose core is the following injectivity lemma for a linear projection applied to the graph of a scalar function.

\begin{lemma}[Injectivity lemma on convex sets]\label{lem:convex-inj}
Let \(\Omega\subset\mathbb R^n\) be a nonempty open convex set, let \(h\in C^1(\Omega)\), and let \(L:\mathbb R^{n+1}\to\mathbb R^n\) be linear. Define
\[
T(y)=L(y,h(y)).
\]
If \(\det DT(y)\ne0\) for all \(y\in\Omega\), then \(T\) is injective.
\end{lemma}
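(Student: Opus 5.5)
Write \(L(y,s)=My+s\,v\), where \(M\in\mathbb R^{n\times n}\) and \(v\in\mathbb R^n\). Then
\[
T(y)=My+h(y)\,v,\qquad DT(y)=M+v\,\nabla h(y)^{\top}.
\]
The plan is to argue by contradiction along the segment joining two points with the same image. Convexity of \(\Omega\) guarantees that this segment lies in the domain. The rank-one structure of \(DT\) then forces a determinant to vanish somewhere on the segment.

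Suppose \(y_0\neq y_1\) in \(\Omega\) with \(T(y_0)=T(y_1)\). Put \(d=y_1-y_0\) and \(\gamma(t)=y_0+td\) for \(t\in[0,1]\); convexity gives \(\gamma(t)\in\Omega\). Subtracting the two equal images gives
\[
Md+\bigl(h(y_1)-h(y_0)\bigr)v=0.
\]
By the mean value theorem applied to \(t\mapsto h(\gamma(t))\), which is \(C^1\) on \([0,1]\), there is \(t^*\in(0,1)\) with
\[
h(y_1)-h(y_0)=\nabla h(\gamma(t^*))\cdot d.
\]
Substituting, we get
\[
\bigl(M+v\,\nabla h(\gamma(t^*))^{\top}\bigr)d=0,
\]
that is, \(DT(\gamma(t^*))\,d=0\) with \(d\neq0\). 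Hence \(\det DT(\gamma(t^*))=0\), which contradicts the hypothesis. So \(T\) is injective.

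The key point, and the only place any care is needed, is that the difference \(T(y_1)-T(y_0)\) is exactly a Jacobian applied to \(d\) evaluated at a single intermediate point. For a general map this fails, because the mean value theorem does not hold for vector-valued functions. It works here because the only nonlinearity is the scalar \(h\), entering along the fixed direction \(v\). One scalar mean value point therefore suffices, and the linear part \(My\) contributes exactly \(Md\). Nothing beyond \(C^1\) regularity and convexity is used. Note also that only \(\det DT\neq0\) is needed; the sign is irrelevant for this lemma.
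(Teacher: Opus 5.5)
Your proof is correct, and it takes a more direct route than the paper's.

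\textbf{The paper's route.} The paper first shows \(\rank L=n\) and picks a generator \((p,r)\) of \(\ker L\). It then splits into two cases.
\begin{itemize}
\item For \(r\neq0\), it normalizes by a target change in \(GL_n(\mathbb R)\) to \(T(y)=y+p\,h(y)\). It observes that two points with the same image differ by a multiple of \(p\). It then uses the matrix determinant lemma to identify \(\det DT\) with the derivative of \(\psi(t)=t+h(y_1+tp)\), which therefore has constant sign.
\item For \(r=0\), it changes coordinates on both sides to reach the normal form \((u,\widetilde h(u,s))\). It then argues by monotonicity in \(s\) on the convex fibers.
\end{itemize}

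\textbf{Your route.} You write \(L(y,s)=My+sv\) and apply the scalar mean value theorem to \(h\) on the segment \([y_0,y_1]\). This gives the exact identity \(T(y_1)-T(y_0)=DT(\xi)(y_1-y_0)\) at an intermediate point \(\xi\), which immediately exhibits a null vector of \(DT(\xi)\).

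\textbf{Comparison.} Your argument treats both of the paper's cases uniformly. It needs no rank argument, no normalization of \(L\), and no determinant identity. It also uses only differentiability of \(h\): the mean value theorem does not require \(\nabla h\) to be continuous. The paper's step ``a nonvanishing continuous derivative has constant sign'' does use \(C^1\), although Rolle's theorem or Darboux's theorem could replace it.

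What the paper's route buys is the explicit normal forms and the identification of \(\det DT\) with a one-variable derivative along the kernel direction. The same mechanism drives the second algebraic proof and the sign comparison in the geometric proof, so the paper's version fits its theme that a one-dimensional kernel reduces the problem to one variable. For the lemma as stated, your argument is the more economical one.
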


\begin{proof}
Since \(\det DT(y)\ne0\), we have \(\rank DT(y)=n\). Hence \(\rank L=n\), because the image of \(DT(y)\) is contained in the image of \(L\). Thus \(\dim\ker L=1\). Let \(k=(p,r)\in\mathbb R^n\times\mathbb R\) be a nonzero vector spanning \(\ker L\). There are two cases.

\emph{Case 1: \(r\ne0\).} Rescale \(k\) so that \(r=-1\), hence \(k=(p,-1)\). Define \(Q(y,z)=y+pz\). Then \(\ker Q=\mathbb R(p,-1)=\ker L\). Since both \(L\) and \(Q\) are full-rank maps from \(\mathbb R^{n+1}\) to \(\mathbb R^n\), there exists \(A_0\in GL_n(\mathbb R)\) such that \(A_0L=Q\). Acting on the target by \(A_0\) does not affect injectivity, so we may assume
\[
T(y)=y+p h(y).
\]
Suppose \(T(y_1)=T(y_2)\). Then
\[
y_2-y_1=p\bigl(h(y_1)-h(y_2)\bigr).
\]
Thus \(y_2-y_1\) is parallel to \(p\). If \(p=0\), then \(T(y)=y\), and there is nothing to prove. Assume \(p\ne0\). Then \(y_2=y_1+sp\) for some \(s\ne0\). By convexity of \(\Omega\), the segment \(\{y_1+tp:0\le t\le s\}\) is contained in \(\Omega\). The equality above gives
\[
s+h(y_1+sp)-h(y_1)=0.
\]
Set \(\psi(t)=t+h(y_1+tp)\). Then \(\psi(s)=\psi(0)\). Moreover,
\[
\psi'(t)=1+\nabla h(y_1+tp)\cdot p.
\]
By the matrix determinant lemma,
\[
\det DT(y_1+tp)=\det\bigl(I+p(\nabla h)^T\bigr)=1+\nabla h(y_1+tp)\cdot p.
\]
Hence \(\psi'(t)=\det DT(y_1+tp)\). By assumption this never vanishes; since it is continuous, its sign is constant, and \(\psi\) is strictly monotone. This contradicts \(\psi(s)=\psi(0)\) unless \(s=0\). Therefore \(y_1=y_2\).

\emph{Case 2: \(r=0\).} Then \(\ker L=\mathbb R(p,0)\) with \(p\ne0\). Choose a linear isomorphism
\[
P:\mathbb R^{n-1}\times\mathbb R\to\mathbb R^n
\]
such that \(P(0,1)=p\). Let
\[
\widetilde\Omega=P^{-1}(\Omega),\qquad \widetilde h(u,s)=h(P(u,s)).
\]
In coordinates \((u,s,z)\in\mathbb R^{n-1}\times\mathbb R\times\mathbb R\), define
\[
\widetilde L(u,s,z)=L(P(u,s),z).
\]
Then \(\ker\widetilde L=\mathbb R(0,1,0)\). Consider
\[
H=\mathbb R^{n-1}\times\{0\}\times\mathbb R\cong\mathbb R^n.
\]
The restriction \(\widetilde L|_H:H\to\mathbb R^n\) is a linear isomorphism. Let
\[
J:H\to\mathbb R^n,
\qquad J(u,0,z)=(u,z),
\]
and define
\[
M=J\circ(\widetilde L|_H)^{-1}\in GL_n(\mathbb R).
\]
Since \((0,s,0)\in\ker\widetilde L\), we have
\[
M\widetilde L(u,s,z)=J(u,0,z)=(u,z).
\]
After the source coordinate change \(y=P(u,s)\) and the target linear change \(M\), the map becomes
\[
\widetilde T(u,s)=M T(P(u,s))=M\widetilde L(u,s,\widetilde h(u,s))=(u,\widetilde h(u,s)).
\]
These transformations are invertible, so they do not affect injectivity; the Jacobian determinant is only multiplied by a nonzero constant.

The Jacobian of \(\widetilde T\) is
\[
D\widetilde T(u,s)=
\begin{pmatrix}
I_{n-1} & 0\\
\partial_u\widetilde h(u,s) & \partial_s\widetilde h(u,s)
\end{pmatrix},
\]
and hence \(\det D\widetilde T(u,s)=\partial_s\widetilde h(u,s)\). Therefore
\[
\partial_s\widetilde h(u,s)\ne0,
\qquad (u,s)\in\widetilde\Omega.
\]
For fixed \(u\), the fiber
\[
I_u=\{s\in\mathbb R:(u,s)\in\widetilde\Omega\}
\]
is an interval, since \(\widetilde\Omega\) is convex. The function \(s\mapsto\widetilde h(u,s)\) has a continuous derivative which never vanishes on \(I_u\), hence it is strictly monotone. If \(\widetilde T(u_1,s_1)=\widetilde T(u_2,s_2)\), the first \(n-1\) components give \(u_1=u_2\), and the last component gives \(\widetilde h(u_1,s_1)=\widetilde h(u_1,s_2)\). Strict monotonicity then yields \(s_1=s_2\). Hence \(\widetilde T\), and therefore \(T\), is injective.
\end{proof}

\begin{proof}[Proof of Theorem~\ref{thm:Nnplus1}, first algebraic form]
Write \(F(x)=A\sigma(Bx+c)\), where \(A\in\mathbb R^{n\times(n+1)}\), \(B\in\mathbb R^{(n+1)\times n}\), and \(c\in\mathbb R^{n+1}\). Since \(\rank B=n\), some \(n\) rows of \(B\) are linearly independent. After reordering the hidden units, assume that the first \(n\) rows are independent. By an invertible affine change of input variables, the first \(n\) preactivations can be normalized to \(x_1,\ldots,x_n\). Thus the network can be written as
\[
F(x)=C\sigma(x)+w\sigma(a^T x+\beta),
\]
where \(C\in\mathbb R^{n\times n}\), \(w\in\mathbb R^n\), \(a\in\mathbb R^n\), \(\beta\in\mathbb R\), and \(b\in\mathbb R^n\). Equivalently,
\[
B=\begin{pmatrix} I_n\\ a^T\end{pmatrix},
\qquad
A=\begin{pmatrix} C & w\end{pmatrix}.
\]
Let \(\Phi:\mathbb R^n\to(0,1)^n\) be \(\Phi(x)=\sigma(x)\), and set
\[
h(y)=\sigma\bigl(a^T\Phi^{-1}(y)+\beta\bigr),
\qquad y\in(0,1)^n.
\]
Then
\[
F(x)=Cy+w h(y)=T(y),
\qquad y=\Phi(x).
\]
Since \(\det DF(x)>0\) and \(\Phi\) is a diffeomorphism, \(\det DT(y)\ne0\) for all \(y\in(0,1)^n\). The domain \((0,1)^n\) is open and convex, so Lemma~\ref{lem:convex-inj} implies that \(T\) is injective. Hence \(F\) is injective.
\end{proof}

This proof highlights the fundamental role of convexity and a one-dimensional kernel in the low-width case of the NJC.

\subsubsection{Second algebraic proof: one-dimensional monotonicity along the kernel direction}

The following second proof was obtained by Moonshine using DeepSeek-V4-pro.

\begin{proof}
Again write \(F(x)=A\sigma(Bx+c)\), where \(A\in\mathbb R^{n\times(n+1)}\), \(B\in\mathbb R^{(n+1)\times n}\), and \(c\in\mathbb R^{n+1}\). As above, after normalization the network can be written as
\[
F(x)=C\sigma(x)+w\sigma(a^Tx+\beta).
\]
Its Jacobian is
\[
DF(x)=C S(x)+s_{n+1}(x)w a^T,
\]
where
\[
S(x)=\diag(\sigma'(x_1),\ldots,\sigma'(x_n)),
\qquad
s_{n+1}(x)=\sigma'(a^Tx+\beta).
\]
All entries of \(S(x)\) and \(s_{n+1}(x)\) are positive.

Suppose, for contradiction, that \(F\) is not injective. Then there exist \(p\ne q\) such that \(F(p)=F(q)\). Let
\[
u=\sigma(p)\in(0,1)^n,
\qquad
v=\sigma(q)\in(0,1)^n.
\]
Define
\[
G(u)=\sigma\bigl(a^T\sigma^{-1}(u)+\beta\bigr),
\]
where \(\sigma^{-1}(s)=\log(s/(1-s))\) acts componentwise. Then
\[
F(x)=C u+wG(u),
\qquad u=\sigma(x).
\]
Since \(A=[C,w]\) has rank \(n\), its kernel is one-dimensional. Choose
\[
k=\begin{pmatrix}\widehat k\\ k_{n+1}\end{pmatrix}\in\ker A\setminus\{0\}.
\]
Then
\[
C\widehat k+k_{n+1}w=0. \tag{1}
\]
The equality \(F(p)=F(q)\) implies that the difference of the hidden-layer outputs lies in \(\ker A\). Thus for some \(\lambda\ne0\),
\[
\begin{pmatrix}v\\G(v)\end{pmatrix}-\begin{pmatrix}u\\G(u)\end{pmatrix}=\lambda k.
\]
In particular,
\[
v=u+\lambda\widehat k,
\qquad
G(v)-G(u)=\lambda k_{n+1}. \tag{2}
\]
Define
\[
f(t)=G(u+t\widehat k)-G(u)-t k_{n+1}
\]
for those \(t\) for which \(u+t\widehat k\in(0,1)^n\). By (2), \(f(0)=f(\lambda)=0\), with \(\lambda\ne0\). Let
\[
x_t=\sigma^{-1}(u+t\widehat k).
\]
Convexity of \((0,1)^n\) ensures that \(x_t\) is well-defined for \(t\) between \(0\) and \(\lambda\). Differentiating gives
\[
f'(t)=\nabla G(u+t\widehat k)\cdot\widehat k-k_{n+1}.
\]
A direct computation yields
\[
\nabla G(u+t\widehat k)=s_{n+1}(x_t)a^T S(x_t)^{-1},
\]
and hence
\[
f'(t)=s_{n+1}(x_t)a^T S(x_t)^{-1}\widehat k-k_{n+1}. \tag{3}
\]
We relate this expression to \(\det DF(x_t)\).

\emph{Case 1: \(k_{n+1}\ne0\).} Rescale \(k\) so that \(k_{n+1}=-1\). Then (1) gives \(w=C\widehat k\). Therefore
\[
DF(x_t)=C\bigl(S(x_t)+s_{n+1}(x_t)\widehat k a^T\bigr).
\]
Since \(A=[C,w]\) has rank \(n\), and \(w=C\widehat k\), the matrix \(C\) is invertible. The matrix determinant lemma gives
\[
\det DF(x_t)=\det C\det S(x_t)\bigl(1+s_{n+1}(x_t)a^TS(x_t)^{-1}\widehat k\bigr).
\]
By (3),
\[
f'(t)=1+s_{n+1}(x_t)a^TS(x_t)^{-1}\widehat k
=\frac{\det DF(x_t)}{\det C\det S(x_t)}.
\]
Since \(\det DF(x_t)>0\), \(\det S(x_t)>0\), and \(\det C\ne0\) is constant, \(f'\) has a fixed nonzero sign on the interval between \(0\) and \(\lambda\). Hence \(f\) is strictly monotone there.

\emph{Case 2: \(k_{n+1}=0\).} Then \(C\widehat k=0\), with \(\widehat k\ne0\). Thus \(\rank C\le n-1\). Since \([C,w]\) has rank \(n\), we have \(\rank C=n-1\), and \(w\notin\Imop C\). Choose \(v_0\ne0\) spanning the left kernel \(\ker C^T\). Then \(v_0^Tw\ne0\). The adjugate matrix has rank one and can be written as
\[
\adj(C)=\gamma\widehat k v_0^T,
\qquad \gamma\ne0. \tag{4}
\]
Writing
\[
DF(x_t)=\bigl(C+s_{n+1}(x_t)w a^T S(x_t)^{-1}\bigr)S(x_t),
\]
and using the rank-one perturbation formula for a rank \(n-1\) matrix,
\[
\det(C+uv^T)=v^T\adj(C)u,
\]
we obtain
\[
\det DF(x_t)=\det S(x_t)\,s_{n+1}(x_t)a^TS(x_t)^{-1}\widehat k\,\gamma(v_0^Tw).
\]
Since \(k_{n+1}=0\), (3) becomes
\[
f'(t)=s_{n+1}(x_t)a^TS(x_t)^{-1}\widehat k.
\]
Therefore
\[
f'(t)=\frac{\det DF(x_t)}{\det S(x_t)\gamma(v_0^Tw)}.
\]
Again \(\det DF(x_t)>0\), \(\det S(x_t)>0\), and \(\gamma(v_0^Tw)\ne0\), so \(f'\) has a fixed nonzero sign. Thus \(f\) is strictly monotone.

In both cases, \(f\) is strictly monotone on the closed interval with endpoints \(0\) and \(\lambda\), but \(f(0)=f(\lambda)=0\) and \(\lambda\ne0\), a contradiction. Hence no distinct \(p,q\) with \(F(p)=F(q)\) exist, and \(F\) is injective.
\end{proof}

\subsubsection{Geometric-topological proof: the hidden-layer submanifold and one-dimensional fibers}

The following geometric-topological proof was developed with the assistance of ChatGPT, using GPT-5.5-pro through interactive use of its web interface.

Use the notation
\[
F(x)=A\sigma(Bx+c),
\]
where
\[
A\in\mathbb R^{n\times(n+1)},
\qquad
B\in\mathbb R^{(n+1)\times n}.
\]
Assume
\[
\rank A=\rank B=n,
\qquad
\det DF(x)>0\quad\forall x\in\mathbb R^n.
\]
Let
\[
h(x)=\sigma(Bx+c),
\]
and define
\[
X_1=h(\mathbb R^n)=\sigma(c+\Imop B)\subset(0,1)^{n+1},
\qquad
X_2=\ker A.
\]
Since \(B\) has full column rank, \(c+\Imop B\) is an affine \(n\)-plane in \(\mathbb R^{n+1}\). Since the componentwise sigmoid is a diffeomorphism from \(\mathbb R^{n+1}\) to \((0,1)^{n+1}\), \(X_1\) is a smoothly embedded \(n\)-dimensional submanifold of \((0,1)^{n+1}\). Moreover \(\dim X_2=1\).

By the hidden-layer intersection formulation, \(F\) is injective if and only if
\[
(p+X_2)\cap X_1=\{p\},
\qquad \forall p\in X_1.
\]
It is therefore enough to prove that every affine line with direction \(X_2\) meets \(X_1\) at most once.

Fix \(p\in X_1\), and choose \(k\in X_2\setminus\{0\}\) with \(X_2=\mathbb R k\). Define
\[
I_p=\{t\in\mathbb R:p+tk\in(0,1)^{n+1}\}.
\]
This is an open interval containing \(0\).

We describe \(X_1\) in logit coordinates. Let
\[
\logit(z)=\left(\log\frac{z_1}{1-z_1},\ldots,\log\frac{z_{n+1}}{1-z_{n+1}}\right).
\]
Then
\[
z\in X_1
\iff
\logit(z)-c\in\Imop B.
\]
Choose \(0\ne\lambda\in(\Imop B)^\perp\), and define
\[
\Phi:(0,1)^{n+1}\to\mathbb R,
\qquad
\Phi(z)=\lambda\cdot(\logit(z)-c).
\]
Then
\[
X_1=\{z\in(0,1)^{n+1}:\Phi(z)=0\}.
\]
Restrict \(\Phi\) to the affine line \(p+X_2\), and define
\[
g:I_p\to\mathbb R,
\qquad
g(t)=\Phi(p+tk).
\]
Then
\[
p+tk\in X_1\iff g(t)=0.
\]
Thus intersections of \(p+X_2\) with \(X_1\) correspond exactly to zeros of \(g\), and \(g(0)=0\).

We first prove that if \(t_0\in I_p\) is a zero of \(g\), then
\[
g'(t_0)\ne0.
\]
This assertion concerns the derivative at zeros only; it does not say that \(g'\) is nonzero on all of \(I_p\).

Let \(g(t_0)=0\), and set \(z_0=p+t_0k\). Then \(z_0\in X_1\), so \(z_0=h(x_0)=\sigma(Bx_0+c)\) for some \(x_0\). Define
\[
D(x_0)=\diag\bigl(\sigma'((Bx_0+c)_1),\ldots,\sigma'((Bx_0+c)_{n+1})\bigr).
\]
Then
\[
T_{z_0}X_1=D(x_0)\Imop B.
\]

\begin{lemma}\label{lem:not-tangent}
At every zero \(t_0\) of \(g\), one has
\[
k\notin T_{z_0}X_1.
\]
\end{lemma}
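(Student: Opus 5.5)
The plan is to argue by contradiction, turning tangency of $k$ into a rank drop of $DF(x_0)$. Suppose $k\in T_{z_0}X_1$. By the description of the tangent space just given, $T_{z_0}X_1=D(x_0)\Imop B$. Hence there is a vector $v\in\mathbb R^n$ with
\[
k=D(x_0)Bv .
\]
Since $k\ne0$, we also have $v\ne0$.

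Next, differentiate $F=A\circ h$ at $x_0$. By the chain rule,
\[
DF(x_0)=A\,D(x_0)\,B .
\]
Applying this to $v$ gives
\[
DF(x_0)v=A\,D(x_0)Bv=Ak=0,
\]
because $k\in X_2=\ker A$. Thus $DF(x_0)$ has the nonzero vector $v$ in its kernel, so $\det DF(x_0)=0$. This contradicts the standing hypothesis $\det DF(x)>0$ for all $x$, and therefore $k\notin T_{z_0}X_1$.

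Two small points need checking, and neither is a real obstacle; this is essentially the transversality remark from the geometric reformulation.
\begin{itemize}
\item The formula $T_{z_0}X_1=D(x_0)\Imop B$ is legitimate because $h$ is an injective immersion. Indeed, $Dh(x)=D(x)B$ has rank $n$, since $D(x)$ is invertible and $B$ has full column rank. Moreover $X_1$ is an embedded submanifold, as noted earlier, so its tangent space at $z_0=h(x_0)$ is the image of $Dh(x_0)$.
\item The point $x_0$ with $h(x_0)=z_0$ exists because $g(t_0)=0$ means exactly that $z_0\in X_1=h(\mathbb R^n)$.
\end{itemize}

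Consequently the positivity of the Jacobian is used only at the single point $x_0$. The mere nonvanishing of $\det DF(x_0)$ already suffices for this lemma; the sign will matter only later, when comparing $g'$ at different zeros.
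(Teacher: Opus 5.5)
Your proof is correct and follows the same route as the paper's: both write $k=D(x_0)Bv$, compute $DF(x_0)v=AD(x_0)Bv=Ak=0$, and use the invertibility of $DF(x_0)$ to reach a contradiction. You phrase the contradiction as a nonzero $v$ in $\ker DF(x_0)$, where the paper concludes $v=0$ and hence $k=0$. Your side remarks on why $T_{z_0}X_1=D(x_0)\Imop B$ holds, on the existence of $x_0$, and on needing only $\det DF(x_0)\neq 0$ are accurate; they make explicit what the paper takes from its preceding setup.
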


\begin{proof}
If \(k\in T_{z_0}X_1\), then \(k=D(x_0)Bv\) for some \(v\in\mathbb R^n\). Since \(k\in X_2=\ker A\),
\[
0=Ak=AD(x_0)Bv.
\]
But \(AD(x_0)B=DF(x_0)\), and \(\det DF(x_0)>0\), so \(AD(x_0)B\) is invertible. Hence \(v=0\), and therefore \(k=0\), a contradiction.
\end{proof}

Since \(X_1\) is the zero level set of \(\Phi\), and on \(X_1\)
\[
T_{z_0}X_1=\ker d\Phi_{z_0},
\]
Lemma~\ref{lem:not-tangent} implies
\[
d\Phi_{z_0}(k)\ne0.
\]
By the chain rule, because \(g(t)=\Phi(p+tk)\),
\[
g'(t_0)=d\Phi_{z_0}(k).
\]
Thus every zero of \(g\) is nondegenerate, in the sense that its derivative is nonzero.

We next show that the signs of \(g'\) at all zeros are the same. For this purpose we use a separate linear-algebra lemma.

\begin{lemma}\label{lem:det-comparison}
Let
\[
A:\mathbb R^{n+1}\to\mathbb R^n,
\qquad
B:\mathbb R^n\to\mathbb R^{n+1}
\]
be linear maps with \(\rank A=\rank B=n\). Suppose
\[
\ker A=\mathbb R k,
\qquad
0\ne\lambda\in(\Imop B)^\perp.
\]
Then there exists a nonzero constant \(C=C(A,B,k,\lambda)\) such that for every \(L\in GL_{n+1}(\mathbb R)\),
\[
\det(ALB)=C\det L\,\lambda^TL^{-1}k.
\]
In particular, if \(\det L>0\), then there is a fixed sign \(\varepsilon\in\{\pm1\}\), depending only on \(A,B,k,\lambda\), such that
\[
\sgn\det(ALB)=\varepsilon\,\sgn(\lambda^TL^{-1}k).
\]
\end{lemma}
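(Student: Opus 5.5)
Since \(\ker A=\mathbb R k\) and \(\Imop B=\lambda^\perp\), the plan is to write \(\det(ALB)\) as a determinant of an \((n+1)\times(n+1)\) matrix built from \(L\), \(k\) and \(\lambda\), so that \(\det L\) and \(\lambda^TL^{-1}k\) appear through a bordered-determinant identity.

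\textbf{Step 1: put \(A\) and \(B\) in normal form.} Choose \(P\in GL_{n+1}(\mathbb R)\) whose first \(n\) columns form a basis of \(\Imop B\) and whose last column is \(k\). Since \(\ker A=\mathbb Rk\), the columns \(AP e_1,\dots,APe_n\) span \(\mathbb R^n\). Hence
\[
AP=\begin{pmatrix}A_0 & 0\end{pmatrix}
\]
with \(A_0\in GL_n(\mathbb R)\). Likewise \(B=PE\bigl(\begin{smallmatrix}B_0\\0\end{smallmatrix}\bigr)\)-type reasoning gives \(P^{-1}B=\begin{pmatrix}B_0\\0\end{pmatrix}\) with \(B_0\in GL_n(\mathbb R)\), because \(\Imop B\) is spanned by the first \(n\) columns of \(P\). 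Set \(M=P^{-1}LP\). Then
\[
ALB=A_0\,M_{11}\,B_0,
\]
where \(M_{11}\) is the top-left \(n\times n\) block of \(M\). Therefore \(\det(ALB)=\det A_0\det B_0\det M_{11}\).

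\textbf{Step 2: Jacobi's complementary minor identity.} For invertible \(M\), the identity gives
\[
\det M_{11}=\det M\cdot (M^{-1})_{n+1,n+1}.
\]
This is the cofactor formula for the \((n+1,n+1)\) entry of \(M^{-1}\), so it needs no extra hypothesis. Here \(\det M=\det L\) and \(M^{-1}=P^{-1}L^{-1}P\), so
\[
(M^{-1})_{n+1,n+1}=e_{n+1}^TP^{-1}L^{-1}Pe_{n+1}=e_{n+1}^TP^{-1}L^{-1}k.
\]

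\textbf{Step 3: identify the last row of \(P^{-1}\).} The row \(e_{n+1}^TP^{-1}\) is the linear functional that vanishes on the first \(n\) columns of \(P\), which span \(\Imop B=\lambda^\perp\), and takes the value \(1\) on \(k\). Such a functional is a multiple of \(\lambda^T\), and we need \(\lambda^Tk\ne0\), i.e.\ \(k\notin\Imop B\). This holds because \(P\) is invertible: the columns of \(\Imop B\) together with \(k\) span \(\mathbb R^{n+1}\). More fundamentally, \(AB\) must be invertible for the constant \(C\) to be nonzero, and under that condition \(k\notin\Imop B\). Then \(e_{n+1}^TP^{-1}=\lambda^T/(\lambda^Tk)\), and the formula holds with
\[
C=\frac{\det A_0\det B_0}{\lambda^Tk}.
\]

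\textbf{Main obstacle: the transversality condition \(k\notin\Imop B\).} The lemma as literally stated does not assume this. If \(k\in\Imop B\), then \(P\) cannot be built as above, and the correct statement is \(\det(ALB)=C\det L\,\lambda^TL^{-1}k\) with some constant \(C\), possibly zero. To handle this, I would instead take \(P\) to be an arbitrary basis adapted to \(\ker A\) only. Then \(\det(ALB)=\det\begin{pmatrix}L & LB'\\ \cdot&\cdot\end{pmatrix}\)-type bordered determinant manipulations (the adjugate argument of the second algebraic proof) give \(\det(ALB)=\det L\cdot \lambda^T L^{-1}k\cdot C\), where \(C\) is computed from \(A\) and \(B\) via adjugates: \(\adj\) of \(B\)'s cokernel pairing with \(\lambda\), and of \(A\)'s kernel pairing with \(k\).

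Nonzeroness of \(C\) is the delicate point. Comparing with \(L=I\) gives \(C\lambda^Tk=\det(AB)\). So if \(\lambda^Tk=0\), nonzeroness must instead come from evaluating at some other \(L\) where \(\det(ALB)\ne0\), which is always possible since \(\operatorname{rank}A=\operatorname{rank}B=n\). That evaluation pins down \(C\ne0\). Once \(C\ne0\), the sign statement for \(\det L>0\) follows immediately with \(\varepsilon=\sgn C\).
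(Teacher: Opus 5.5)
Your Steps 1--3 are correct, but only under a hypothesis the lemma does not make. The matrix $P$ whose first $n$ columns span $\Imop B$ and whose last column is $k$ is invertible only when $k\notin\Imop B$, i.e.\ $\lambda^Tk\neq0$, i.e.\ $\det(AB)\neq0$. Nothing in the statement excludes $k\in\Imop B$, and for that case you give no argument. The paragraph on the ``main obstacle'' only gestures at bordered-determinant and adjugate manipulations and never derives $\det(ALB)=C\det L\,\lambda^TL^{-1}k$. Several remarks there are also wrong:
\begin{itemize}
\item Justifying $k\notin\Imop B$ by ``$P$ is invertible'' is circular.
\item $C$ is never zero, and $C\neq0$ does not require $AB$ to be invertible. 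When $k\in\Imop B$ the lemma still holds with $C\neq0$. Taking $L=I$ then just gives $\det(AB)=C\,\lambda^Tk=0$, while $\det(ALB)\neq0$ for other $L$.
\end{itemize}
Your closing observation is valid: if one constant works for every $L$, evaluating at some $L$ with $\det(ALB)\neq0$ forces $C\neq0$. But it presupposes exactly the identity you have not established in this case.

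The repair is short, and there are two ways to do it.

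\textbf{Reduce to your generic case.} Choose $L_0\in GL_{n+1}(\mathbb R)$ with $k\notin L_0\Imop B$. Apply Steps 1--3 to the data $(A,\,L_0B,\,k,\,L_0^{-T}\lambda)$ with $L'=LL_0^{-1}$. Since $AL'(L_0B)=ALB$, $\det L'=\det L/\det L_0$ and $(L_0^{-T}\lambda)^T(L')^{-1}k=\lambda^TL^{-1}k$, you get the identity with $C=C'/\det L_0\neq0$.

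\textbf{Follow the paper.} The paper never needs $\ker A$ and $\Imop B$ to be complementary, because it handles $A$ and $B$ separately. It uses $\det(AM)=c_A\det[M,k]$ for every $M\in\mathbb R^{(n+1)\times n}$, $\det[B,q]=c_B\lambda^Tq$ for every $q$, and $[LB,k]=L[B,L^{-1}k]$. Together these give $C=c_Ac_B\neq0$ with no case distinction.

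It is your simultaneous normal form for $A$ and $B$ that forces the transversality assumption.
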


\begin{proof}
First, for any \(M=[m_1,\ldots,m_n]\in\mathbb R^{(n+1)\times n}\), since \(A\) has rank \(n\) and \(\ker A=\mathbb Rk\), there is a nonzero constant \(c_A\), depending only on \(A\) and \(k\), such that
\[
\det(AM)=c_A\det[M,k].
\]
Indeed, choose a basis \(u_1,\ldots,u_n\) of a complement to \(\mathbb Rk\), and set \(U=[u_1,\ldots,u_n]\). Then \([U,k]\) is invertible and \(AU\) is invertible. Every \(M\) can be written uniquely as \(M=UY+k\alpha^T\). Then
\[
\det(AM)=\det(AU)\det Y,
\qquad
\det[M,k]=\det[U,k]\det Y,
\]
which gives the desired proportionality.

Second, since \(\rank B=n\), the image \(\Imop B\) is an \(n\)-dimensional hyperplane in \(\mathbb R^{n+1}\). The map \(q\mapsto\det[B,q]\) is a linear functional vanishing on \(\Imop B\), as is \(q\mapsto\lambda^Tq\). Since the space of such functionals is one-dimensional, there exists \(c_B\ne0\) such that
\[
\det[B,q]=c_B\lambda^Tq,
\qquad \forall q\in\mathbb R^{n+1}.
\]
Now take \(M=LB\). Then
\[
\det(ALB)=c_A\det[LB,k].
\]
Since \([LB,k]=L[B,L^{-1}k]\),
\[
\det[LB,k]=\det L\det[B,L^{-1}k]
=\det L\,c_B\lambda^TL^{-1}k.
\]
Thus
\[
\det(ALB)=c_Ac_B\det L\,\lambda^TL^{-1}k.
\]
Set \(C=c_Ac_B\ne0\). The sign statement follows immediately.
\end{proof}

Return to the geometric proof. At a zero \(t_0\), write again \(z_0=p+t_0k=h(x_0)\). Since \(z_0=\sigma(Bx_0+c)\),
\[
D(x_0)=\diag\bigl(z_{0,1}(1-z_{0,1}),\ldots,z_{0,n+1}(1-z_{0,n+1})\bigr),
\]
which is a positive diagonal matrix. Moreover,
\[
d\Phi_{z_0}(w)=\lambda^TD(x_0)^{-1}w,
\qquad
\text{so}
\qquad
d\Phi_{z_0}(k)=\lambda^TD(x_0)^{-1}k.
\]
Applying Lemma~\ref{lem:det-comparison} with \(L=D(x_0)\), we get
\[
\det(AD(x_0)B)=C\det D(x_0)\,d\Phi_{z_0}(k).
\]
Since \(\det D(x_0)>0\), there exists a fixed sign \(\varepsilon\), independent of the intersection point, such that
\[
\sgn\det(AD(x_0)B)=\varepsilon\,\sgn d\Phi_{z_0}(k).
\]
But \(\det(AD(x_0)B)=\det DF(x_0)>0\), so all zeros \(t_0\) have the same sign of \(d\Phi_{z_0}(k)\), and hence the same sign of \(g'(t_0)\).

Finally, we use the following elementary one-dimensional fact: if \(g:I\to\mathbb R\) is \(C^1\), and \(t_1<t_2\) are two adjacent zeros with \(g'(t_1)\ne0\) and \(g'(t_2)\ne0\), then
\[
\sgn g'(t_1)=-\sgn g'(t_2).
\]
Indeed, if \(g'(t_1)>0\), then \(g\) is positive just to the right of \(t_1\). Since there are no zeros in \((t_1,t_2)\), it remains positive on that interval. Since \(g(t_2)=0\) and \(g'(t_2)\ne0\), it follows that \(g'(t_2)<0\). The other case is analogous.

If there were another intersection, then for some \(\lambda_0\in I_p\setminus\{0\}\), \(g(\lambda_0)=0\). On the compact interval with endpoints \(0\) and \(\lambda_0\), the zeros are isolated and hence finite. Thus two adjacent zeros can be chosen. The one-dimensional fact forces the signs of \(g'\) at these two zeros to be opposite, whereas the argument above shows that all such signs are equal. This contradiction proves
\[
g^{-1}(0)=\{0\}.
\]
Equivalently,
\[
(p+X_2)\cap X_1=\{p\}.
\]
Since \(p\in X_1\) was arbitrary, \(F\) is globally injective.

This proof reveals the essence of the case \(N=n+1\): the one-dimensional output kernel reduces the intersection problem to a one-variable zero problem; nondegenerate zeros in one dimension have alternating derivative signs, while the positive Jacobian condition forces all local signs to agree. This mechanism no longer holds automatically for higher-dimensional fibers, which explains the difficulty of the case \(N\ge n+2\).

\subsection{The higher-width case \texorpdfstring{\(N\ge n+2\)}{N >= n+2} remains open}

For \(N\ge n+2\), the dimension of the kernel \(X_2\) is at least two. Local positive index no longer excludes multiple intersections through one-dimensional sign alternation. Higher-dimensional maps can have multiple regular zeros, all with local index \(+1\), while still failing to be injective. Thus the Neural Jacobian Conjecture remains open in this higher-width regime.

\section{Conclusion and Outlook}

By reflecting on the classical Jacobian conjecture, Moonshine extracted the core principle ``local nondegeneracy implies global injectivity'' and transferred it to one-hidden-layer affine-ridge sigmoid networks, thereby proposing the Neural Jacobian Conjecture. If fully true, the conjecture would reveal an intrinsic rigidity of a special class of neural networks. Even if it is eventually disproved, its exploration helps clarify the boundary between local diffeomorphism and global injectivity.

Moonshine has proved the NJC in the lowest nontrivial widths \(N=n\) and \(N=n+1\), providing initial evidence for its plausibility. For the general higher-width case \(N\ge n+2\), the conjecture is neither proved nor disproved and remains an active open problem. This exemplifies Moonshine's working mode as a conjecture-generating mathematical agent: it formulates precise conjectures, establishes rigorous partial results, and identifies the unresolved boundary that guides subsequent research.

The complete source code, research logs, and intermediate verification records are available in the project repository: \url{https://github.com/DeepMathLLM/Moonshine}.

\appendix
\section{Supplementary Remarks}

\begin{remark}[The Jacobian determinant cannot be a nonzero constant]
For \(N>n\), the Jacobian determinant \(\det DF(x)\) of a network \(F\in\mathcal N_{n,N}\) can never be a nonzero constant. By the Cauchy-Binet formula,
\[
\det(AD(x)B)=\sum_{|I|=n}\det(A_I)\det(B_I)\prod_{i\in I}\sigma'(b_i\cdot x+c_i),
\]
where \(I\) ranges over all \(n\)-element subsets of \(\{1,\ldots,N\}\), \(A_I\) is the \(n\times n\) submatrix of \(A\) with columns in \(I\), \(B_I\) is the \(n\times n\) submatrix of \(B\) with rows in \(I\), and the product consists of positive diagonal entries.

Choose a direction \(u\in\mathbb R^n\) such that \(b_i\cdot u\ne0\) for all nonzero rows \(b_i\). Along the ray \(x=tu\), as \(t\to+\infty\), each preactivation tends to \(\pm\infty\), and hence \(\sigma'(b_i\cdot(tu)+c_i)\to0\). Thus every summand tends to zero, and \(\det DF(tu)\to0\). If \(\det DF\) were a constant \(c\), this limit would force \(c=0\). Hence the determinant cannot be a nonzero constant.
\end{remark}

\begin{remark}[Necessity of full-rank conditions]
If \(\rank A<n\) or \(\rank B<n\), then \(AD(x)B\) has rank at most \(\min(\rank A,\rank B)<n\), and therefore \(\det DF(x)=0\) for all \(x\). Thus any \(F\in\mathcal N_{n,N}^+\) automatically satisfies \(\rank A=\rank B=n\), which in particular requires \(N\ge n\).
\end{remark}

\begin{remark}[Other activation functions]
The proofs use only the facts that \(\sigma'(t)>0\) and that \(\sigma\) has a smooth inverse on its image. Therefore the same result holds for any strictly increasing \(C^1\) activation function mapping \(\mathbb R\) diffeomorphically onto an open interval, such as \((0,1)\). The logistic sigmoid is merely a convenient example.
\end{remark}

\begin{remark}[The complex case is not a direct analogue of the real case]
The NJC considered in this paper is a real-domain statement. If sigmoid networks are complexified directly, the analogous Jacobian-type conclusion generally fails.

Consider the complex logistic sigmoid
\[
\sigma(z)=\frac{1}{1+e^{-z}}.
\]
It is meromorphic, with poles at
\[
z=(2k+1)\pi i,
\qquad k\in\mathbb Z.
\]
For a complex shallow sigmoid network
\[
F(z)=W\sigma(Az+b)+c,
\qquad z\in\mathbb C^n,
\]
where \(A\in\mathbb C^{N\times n}\), \(W\in\mathbb C^{n\times N}\), and \(b\in\mathbb C^N\), let the \(j\)-th preactivation be
\[
\ell_j(z)=a_j^Tz+b_j,
\qquad j=1,\ldots,N.
\]
The natural holomorphic domain is
\[
\Omega=\mathbb C^n\setminus\bigcup_{j=1}^N\bigcup_{k\in\mathbb Z}\{z\in\mathbb C^n:\ell_j(z)=(2k+1)\pi i\}.
\]
On this domain the complex sigmoid satisfies
\[
\sigma(\zeta+2\pi i)=\sigma(\zeta).
\]
Thus one may define the period lattice of the input matrix \(A\) by
\[
L(A)=\{v\in\mathbb C^n:Av\in(2\pi i\mathbb Z)^N\}.
\]
If \(0\ne v\in L(A)\), then \(\sigma(A(z+v)+b)=\sigma(Az+b)\), and hence \(F(z+v)=F(z)\). Such a period vector gives non-injectivity.

In the square case \(N=n\), if \(A,W\in GL_n(\mathbb C)\), then
\[
DF(z)=W\diag(\sigma'(Az+b))A.
\]
On \(\Omega\), \(\sigma'\) has no zeros, so \(\det DF(z)\ne0\). However, for any nonzero \(m\in\mathbb Z^n\),
\[
v=A^{-1}(2\pi i m)
\]
is a nonzero period vector, and therefore \(F(z+v)=F(z)\). Thus the implication
\[
\det DF(z)\ne0\text{ on }\Omega
\quad\Longrightarrow\quad
F\text{ is injective on }\Omega
\]
is false in the complex setting.

For \(N>n\), a second mechanism appears: cancellation by the output kernel. For example, with \(n=2\) and \(N=3\), define
\[
F(z_1,z_2)=
\begin{pmatrix}
\sigma(z_1+\sqrt2 z_2)-\sigma(z_1+z_2)\\
\sigma(\sqrt3 z_1+z_2)-\sigma(z_1+z_2)
\end{pmatrix}.
\]
This corresponds to
\[
A=
\begin{pmatrix}
1&\sqrt2\\
\sqrt3&1\\
1&1
\end{pmatrix},
\qquad
W=
\begin{pmatrix}
1&0&-1\\
0&1&-1
\end{pmatrix}.
\]
Take
\[
z=\left(\frac{2\pi i}{\sqrt3-1},\frac{2\pi i}{\sqrt2-1}\right).
\]
Then
\[
z_1+\sqrt2z_2=(z_1+z_2)+2\pi i,
\qquad
\sqrt3z_1+z_2=(z_1+z_2)+2\pi i.
\]
By periodicity, \(F(z)=0=F(0)\), with \(z\ne0\). Hence this network is not injective.
\end{remark}

\end{document}